\newcounter{protocol}
\newenvironment{protocol}[1][htb]{%
  \let\c@algorithm\c@protocol
  \renewcommand{\ALG@name}{Protocol}
  \begin{algorithm}[#1]%
  }{\end{algorithm}
}
\newtheorem{theorem}{Theorem}
\title{EFMVFL: An Efficient and Flexible Multi-party Vertical Federated Learning without a Third Party}
\author{
Yimin Huang$^1$
\and
Xinyu Feng$^1$\and
Wanwan Wang$^{1}$\and
Hao He$^1$\and
Yukun Wang$^1$\and
Ming Yao$^1$\footnote{Corresponding author.}
\affiliations
$^1$InsightOne Tech Co, Ltd
\emails
\{huangyimin, fengxinyu, wangwanwan, hehao, wangyukun, yaoming\}@insightone.cn
}
\begin{document}

\maketitle

\begin{abstract}
    Federated learning allows multiple participants to conduct joint modeling without disclosing their local data. Vertical federated learning (VFL) handles the situation where participants share the same ID space and different feature spaces. In most VFL frameworks, to protect the security and privacy of the participants' local data, a third party is needed to generate homomorphic encryption key pairs and perform decryption operations. In this way, the third party is granted the right to decrypt information related to model parameters. However, it isn't easy to find such a credible entity in the real world. Existing methods for solving this problem are either communication-intensive or unsuitable for multi-party scenarios. By combining secret sharing and homomorphic encryption, we propose a novel VFL framework without a third party called EFMVFL, which supports flexible expansion to multiple participants with low communication overhead and is applicable to generalized linear models. We give instantiations of our framework under logistic regression and Poisson regression. Theoretical analysis and experiments show that our framework is secure, more efficient, and easy to be extended to multiple participants.

\end{abstract}

\section{Introduction}
With the improvement of data security and privacy protection laws, enterprises worldwide have to face the dual needs of solving data silos problems and protecting data privacy.
Federated learning (FL) was proposed by \cite{googleFL1} for joint machine learning modeling of multi-party data, 
where original data will not be shared among participants, only the intermediate gradients or parameters are transmitted. 

FL was mainly applied in the scenario of horizontal distribution of data when it was first proposed \cite{HFL1,googleFL,HFL2}, which assumes that each
client's data share the same feature space, but few sample IDs are overlapped among multiple participants.
Then \cite{silosFL} extended the concept of FL to horizontal federated learning (HFL), vertical federated learning (VFL), and federated transfer learning (FTL). 
In this paper, we will focus on VFL, which is one of the most commonly used methods in FL among enterprises, 
since it can improve the effect of machine learning models in many scenarios such as medicine, finance, advertising, and marketing by expanding the feature dimension of 
the same samples.

However, it has been proved that the local data could be inferred through the exposing of intermediate results (e.g., gradients) in original FL \cite{Leakage}. 
In subsequent works, cryptographic techniques were introduced into FL, 
e.g., Secret Sharing (SS) \cite{SSFL}, Homomorphic Encryption (HE) \cite{HEFL}, Differential privacy (DP) \cite{DPFL}, etc. 
HE offers an elegant way to protect the intermediate results by allowing encrypted data to be blindly processed
and has been widely used in privacy-preserving generalized linear models (e.g., linear Regression~\cite{silosFL}, logistic Regression (LR)~\cite{HEPR2017,HELR2018,ZWW21}), 
tree-based models (e.g., Decision Tree~\cite{VFLtree}, eXtreme Gradient Boosting (XGB)~\cite{xgb}, Random Forest~\cite{RF}) and deep learning models~\cite{DNN1,DNN2,DNN3}.

In most HE-based works, a third party is often needed to assist in generating HE key pairs and decrypting ciphertexts.
In this way, the third party is able to get the plaintext information related to model parameters. 
So the third party must be fair and credible and can not collude with other parties. Such a credible entity is difficult to find in practice.

A series of VFL frameworks without a third party were proposed to handle this problem. However, there are still some problems. 
For example, some methods allow both participants to generate HE key pairs and send the public key to each other for encrypting the data that needs to be exchanged during training process.
When the final result needs to be decrypted, the party will add noises generated by itself before sending it to the other party for decryption, 
and the party can restore accurate information by removing the noises.
Those frameworks only support two participants initially and need taking a lot of work to expand to three or more participants.
In fact, many methods with a third party~\cite{silosFL,HEPR2017,HELR2018} are not easy to be expanded to multiple parties either because of using a partially homomorphic encryption.

Another solution to the above problem is to use Secure Multi-Party Computation (MPC), such as SS-based methods~\cite{SSFL,SSLR2020,SSPR2021}. Those methods can support multi-party modeling.
While during the training process, 
model weights and all of the training data needs to be shared through a secret-sharing algorithm, which introduces an enormous communication overhead. 

\cite{ZWW21} proposed a framework that combines SS and HE together in privacy-preserving VFL logistic regression. It solved the problem that features are usually
high-dimensional and sparse in practice because of missing feature values or feature engineering such as one-hot encoding; meanwhile, it removed the third party.
As it shared the model parameters besides intermediate results following the idea of MPC,
it is hard for this framework to be extended to multiple parties, and it suffers from more communication overheads than our framework.

In this paper, we proposed an Eﬀicient and Flexible Multi-party Vertical Federated Learning without a Third Party called EFMVFL by combining HE and SS. 
Instead of sharing model parameters, 
we still follow the idea of FL that the model weights corresponding to the features are kept locally and updated locally by the respective participants.
Only necessary intermediate results (such as the inner product of feature data and model weights) are shared. And then, we can train the model based on SS data
and use HE to protect the privacy. 
In this way, our framework is scaling to multiple participants flexibly with low communication overhead and applicable to generalized linear models.

\subsection*{Our Contributions}
\begin{enumerate}
    \item We propose a new VFL framework combining SS and HE,
    where no trusted third party is needed.
    \item Our framework is flexible to be applied in generalized linear models.
    \item Our framework is easy to be extended to support multiple parties.
    \item We implement our framework under logistic regression (LR) and Poisson regression (PR), and prove that our VFL framework is more efficient and 
         has a lower communication overhead compared with recent popular works.
\end{enumerate}

\section{Notations}
We first introduce some notations here. As mentioned above, the focused scenario of our framework is VFL, in which data are
vertically partitioned by parties, and there is only one party holding the label.
Furtherly, we use $\mathbf{C}$ to  denote the party with the label (also named data demander), 
and use $\mathbf{B_i}$ to denote parties without label (also named data provider), where $\mathbf{i}$ is from $1$ to the num of data providers.

Correspondingly, in federated learning models, $Y$ denotes the label of party $\mathbf{C}$, $X_{p}$ denotes the features, 
$W_{p}$ denotes the linear model coefficients, $W_{p}X_{p}$ denotes the dot between model coefficients $W_{p}$ and $X_{p}$,
$\{pk_p,sk_p\}$ denotes the HE key pairs, $\left[\left[x\right]\right]_{p}$ denotes the ciphertext of x that are encrypted by using $pk_{p}$,
where $p$ is the party that can be $\mathbf{C}$ or $\mathbf{B_i}$, $\langle x\rangle _{p}$ is the secret share of x in party $\mathbf{P}$.

\section{Preliminaries}
\subsection{Secret-Sharing-based MPC}
Secure Multi-Party Computation (MPC) was first proposed in \cite{MPC}, which allows multiple parties to jointly perform privacy-preserving computing tasks.
Secret Sharing \cite{SS} is one of the main techniques to achieve this goal by splitting local data into shares first. 
Only secret-sharing addition and multiplication operations are used in our framework.

Assuming that party $\mathbf{C}$ holds data $X_c$ and party $\mathbf{B_1}$ holds data $X_{b_1}$. 
Firstly, they both need to share their data using Protocol \ref{alg:ss-protocol} introduced in the next section.
Then they can calculate the shares of $X_c+X_{b_1}$ and $X_c*X_{b_1}$ as follows:
\begin{enumerate}
\item\textbf{Addition}: The share of $X_c+X_{b_1}$ in party $\mathbf{C}$ (i.e., $\langle X_c+X_{b_1} \rangle_c$) can be calculated by $\langle X_c\rangle_c+\langle X_{b_1}\rangle_c$, 
similarly, $\langle X_c+X_{b_1} \rangle_{b_1} = \langle X_c\rangle_{b_1}+\langle X_{b_1}\rangle_{b_1} $
\item\textbf{Multiplication}: Besides the secret sharing of the original data, additional Beaver's triplet ($(\mu, \nu, \omega)$ and $\omega=\mu * \nu$ ) \cite{Beaver91} is required for calculating shares of $X_c*X_{b_1}$.
Specifically, each party get the share that $\langle X_c*X_{b_1} \rangle = \langle \omega \rangle + (\langle X_c\rangle - \langle \mu \rangle)*\langle \nu \rangle+(\langle X_{b_1}\rangle - \langle \nu \rangle)*\langle \mu \rangle+(\langle X_c\rangle-\langle \mu \rangle)*(\langle X_{b_1}\rangle -\langle \nu \rangle$).
\end{enumerate}
There have been many protocols to complete the above calculations, such as secureML \cite{SecureML}, ABY3\cite{ABY3}, secureNN \cite{SecureNN}, SPDZ \cite{SPDZ}, etc.

\subsection{Homomorphic Encryption}
HE methods support computation over ciphertexts, and the result of operating on ciphertexts and then decrypting is the same with the mathematical operations directly on plaintexts. 
As in secret sharing, only the addition and multiplication are needed in our framework. Specifically, the use of HE to calculate $X_c+X_{b_1}$ and $X_c*X_{b_1}$ mainly consists of the following steps:
\begin{enumerate}
    \item\textbf{Key generation}: One party (e.g., $\mathbf{C}$ ) generates HE key pairs ($\{pk_c,sk_c\}=\textbf{Gen}(1^\lambda)$, where $\lambda$ is a security parameter), and can send public key $pk_c$ to the other party.
    \item\textbf{Encryption}: $\mathbf{C}$ uses the $pk_c$ to encrypt data ($\left[\left[ X_c \right] \right]_{c}=\textbf{Enc}(X_c,pk_c)$). In the same way, the other party (i.e. $\mathbf{B_1}$) can get $\left[\left[ X_{b_1} \right] \right]_{c}$.
    \item\textbf{Addition}: Given $\left[\left[ X_c \right] \right]_{c}$ and $\left[\left[ X_{b_1} \right] \right]_{c}$, the addition between the two ciphertexts is $\left[\left[ X_c \right] \right]_{c} \bigoplus  \left[\left[ X_{b_1} \right] \right]_{c} = \left[\left[ X_c + X_{b_1} \right] \right]_{c}$.
    \item\textbf{Multiplication}: Given $\left[\left[ X_c \right] \right]_{c}$ and $X_{b_1}$, the multiplication between the ciphertext and plaintext is $\left[\left[ X_c \right] \right]_{c} \bigotimes  X_{b_1} = \left[\left[ X_c * X_{b_1} \right] \right]_{c}$.
    \item\textbf{Decryption}: $\mathbf{C}$ uses $sk_c$ to decrypt the ciphertext and can get the  plaintext $X_c$ ($X_c=\textbf{Dec}(\left[\left[ X_c \right] \right]_{c},sk_c)$).
\end{enumerate}
We call this probabilistic asymmetric encryption scheme for restricted computation (addition and multiplication) over ciphertexts partially homomorphic encryption (PHE). 
In this paper, we utilize the Paillier cryptosystem~\cite{Paillier99}.

\subsection{Generalized Linear Models}
\label{glm}
Generalized linear models (GLMs) are flexible generalizations of linear regression. The GLM generalizes linear regression by allowing the linear model to be related to the response variable ($Y$) via a link function and by allowing the magnitude of the variance of each measurement to be a function of its predicted value.
Each kind of GLM consists of the following three elements:
\begin{enumerate}
    \item An exponential family of probability distributions that $E(Y|X)$ is assumed to satisfy.
    \item A linear predictor $ \eta =WX $.
    \item A link function $g$ such that $E(Y|X)=\mu =g^{-1}(\eta )$.
\end{enumerate}
The maximum likelihood estimation (MLE)  using iteratively gradient descent method is commonly applied to solve the weight parameters ($W$) of GLMs. 
For Example:

$\mathbf{Logistic\; regression}$ is actually a binary classification model and is widely used in industry because of its simplicity and interpretability. It 
assumes that $E(Y|X)$ satisfies Bernoulli distribution, and the link function is $g=\frac{\eta}{1-\eta}$. LR can find a direct relationship between the classification probability and the input vector ($WX$)
by operating the Sigmoid function.
The loss of LR can be calculated through MLE,
\begin{equation}
\label{eq:lr_loss}
    loss_{LR} = \frac{1}{m}  \sum_{i=1}^m ln(1+e^{-YWX}),
\end{equation}
where $m$ is the sample size, and $Y\in$\{-1,1\} is the data label.

With the formula of loss, we can calculate its gradient,
 and approximate the gradient with MacLaurin expansion to avoid non-linear calculations:
\begin{equation}
\label{eq:lr_grad}
    g_{LR} = X^T*\frac{1}{m}(0.25WX - 0.5Y),
\end{equation}
where $T$ means the transposition of a matrix and $*$ is the matrix multiplication operation.

$\mathbf{Poisson\; regression}$  assumes that $E(Y|X)$  has a Poisson distribution and usually adopts Log function ($g=\ln(\eta)$) as its link function.
PR is used to represent counts of rare independent events which happen at random but at a fixed rate, such as the number of claims in insurance policies in a certain
period of time, and the number of purchases a user makes after being shown online advertisements, etc. Also with the MLE, the formula of its loss can be written as:
\begin{equation}
\label{eq:poisson_loss}
    loss_{PR} = \frac{1}{m}  \sum_{i=1}^m (YWX-e^{WX} - ln(Y!))
\end{equation}
and the gradient can be calculated by the following equation:
\begin{equation}
\label{eq:poisson_grad}
    g_{PR} = X^T*\frac{1}{m}(e^{WX}-Y).
\end{equation}
\section{Proposed framework}

In this section, we first introduce the core of ideology and the main architecture of our proposed framework.
Then we show readers how to apply our framework into GLMs (like LR, PR) in federated learning scenarios.
In the following, we show that it's easy for our framework to be extended to multiple parties (three or more).
At last, we give a security analysis of our framework.

\subsection{Ideology and Architecture}
\label{2p}
The secret sharing is usually used in MPC-based schemes, which leads to resource and communication consumption because the original data is split and then all shared.
In our schemes, instead of sharing original data, we split intermediate results of GLMs training (e.g., $W_{p}X_{p}$) only, which leads to a dramatic drop in communication.

There are four main protocols in our framework, i.e., 
Secret sharing protocol, Secure gradient-operatorcomputing~protocol,
Secure gradient computing protocol, and Secure loss computing~protocol.
In this section, we only consider two-party situation (party $\mathbf{C}$ and $\mathbf{B_1}$) 
and will explain how to extend our framework to multiple parties in Section \ref{mp}.

$\mathbf{Secret\; sharing\; protocol}$ 
is used to securely split data into shares. Then party $\mathbf{C}$ and $\mathbf{B_1}$ can get the shares. It can be 
achieved by existing MPC protocols like SPDZ, secureML, etc. Protocol \ref{alg:ss-protocol} is an example of secret sharing.

\begin{protocol}[tb]
\caption{Secret sharing protocol}
\label{alg:ss-protocol}
\textbf{Input}: a vector data $Z$, party $P_0$, $Z$ is held by $P_0$, $P_0$ may be $\mathbf{C}$ or $\mathbf{B_{1}}$\\
\textbf{Output}: $\langle Z\rangle _{c}$ for $\mathbf{C}$ and $\langle Z\rangle _{b_{1}}$ for $\mathbf{B_{1}}$, 
                 and $\langle Z\rangle _{c}$+ $\langle Z\rangle _{b_{1}}$=$Z$

\begin{algorithmic}[1] 
\STATE $P_1$= $\{\mathbf{C}$,$\mathbf{B_{1}}\}$-$\{P_0\}$
\STATE $P_0$ locally generates a share $\langle Z \rangle _{p_0}$
\STATE $P_0$ calculates $\langle Z\rangle _{p_1}$ =  $Z$ - $\langle Z\rangle _{p_0}$
\STATE $P_0$ sends $\langle Z\rangle _{p_1}$ to $\mathbf{P_1}$
\STATE \textbf{return} $\langle Z\rangle _{c}$ for $\mathbf{C}$ and $\langle Z\rangle _{b_{1}}$ for $\mathbf{b_{1}}$
\end{algorithmic}
\end{protocol}

$\mathbf{Secure\; gradient \mbox{-} operator\; computing\; protocol.}$ 
Gradient descent is the main method for solving machine learning tasks. As introduced in Section \ref{glm}, the gradient of GLMs can be formalized as follows:
\begin{equation}
\label{eq:grad}
   g = X^T*d,
\end{equation}
where $g$ is the gradient, $X$ is the feature data,
and we define $d$ as gradient-operator. The calculation formula of $d$ varies with different models.
We will show this in the next subsection. Through this formula, we can calculate shares of $d$ for different parties using the MPC method based on SS.
Eventually party $\mathbf{C}$ gets $\langle d \rangle_c$ and party $\mathbf{B_1}$ gets $\langle d\rangle_{b_1}$ (see Protocol~\ref{alg:sgo-protocol}).

\begin{protocol}[tb]
\caption{Secure gradient-operator computing protocol}
\label{alg:sgo-protocol}
\textbf{Input}: $\langle Z\rangle_{c}$'s and $\langle Z\rangle _{b_{1}}$'s calculated by Protocol \ref{alg:ss-protocol}, 
$Z$  could be the share of $WX$, share of $Y$, etc.\\
\textbf{Output}: $\langle d \rangle _{c}$ for $\mathbf{C}$ and $\langle d \rangle _{b_{1}}$ for $\mathbf{B_{1}}$

\begin{algorithmic}[1] 
\STATE According to eq~(\ref{eq:lr_grad}) and eq~(\ref{eq:poisson_grad}), $\mathbf{C}$ and $\mathbf{B_{1}}$ calculate $\langle d \rangle _{c}$  and $\langle d \rangle _{b_{1}}$ base on MPC method separately
\STATE \textbf{return} $\langle d\rangle _{c}$ for $\mathbf{C}$ and $\langle d\rangle _{b_{1}}$ for $\mathbf{B_{1}}$
\end{algorithmic}
\end{protocol}
$\mathbf{Secure\; gradient\; computing\; protocol.}$ 
The gradient of GLMs can be calculated by eq~(\ref{eq:grad}). However, the gradient-operator $d$ is shared over party $\mathbf{C}$ and $\mathbf{B_{1}}$. 
So for the gradient of one party (e.g., $\mathbf{C}$), its one share ($\langle g \rangle_c$) can be locally calculated by eq~(\ref{eq:grad}), 
while another share ($\langle g \rangle_{b_1}$) can not be computed directly, because party $\mathbf{B_{i}}$ can neither get $X_c$ directly nor send his $\langle g \rangle_{b_1}$ to  $\mathbf{C}$. 
In order to securely compute the gradient,  we introduce homomorphic encryption in this protocol (details in Protocol~\ref{alg:sg-protocol}). 
After the gradient is calculated, model coefficients can be updated using eq~(\ref{eq:grad1}) locally.  
\begin{equation}
\label{eq:grad1}
    W = W - \alpha g,
\end{equation}
where $\alpha$ is the learning rate.

\begin{protocol}[tb]
\caption{Secure gradient computing protocol}
\label{alg:sg-protocol}
\textbf{Input}: $\langle d\rangle_{c}$ and $\langle d\rangle _{b_{1}}$ calculated by Protocol \ref{alg:sgo-protocol}, 
feature data matrix $X_{p_0}$, party $P_0$, $X_{p_0}$ is held by $P_0$, $P_0$ may be $\mathbf{C}$ or $\mathbf{B_{1}}$.\\
\textbf{Output}: $g_{c}$ for $\mathbf{C}$ if $P_0$ is $\mathbf{C}$ or $g_{b_1}$ for $\mathbf{B_1}$ if $P_0$ is $\mathbf{B_1}$

\begin{algorithmic}[1] 
\STATE $P_1$= $\{\mathbf{C}$,$\mathbf{B_{1}}\}$-$\{P_0\}$
\STATE $P_0$ locally calculates the share of gradient using eq~(\ref{eq:grad}), i.e. $\langle g_{p_0}\rangle_{p_0} = X_{p_0}^T*d_{p_0}$
\STATE $P_1$ encrypts $\langle d\rangle_{p_1}$ using $pk_{p_1}$ and sends $\left[\left[ \langle d\rangle_{p_1} \right]\right]_{p_1}$ to $P_0$
\STATE $P_0$ locally calculates encrypted share of gradient using eq(\ref{eq:grad}), 
             i.e., $\left[\left[\langle g_{p_0}\rangle_{p_1}\right]\right]_{p_1} = X_{p_0}^T*\left[\left[ \langle d\rangle_{p_1} \right]\right]_{p_1}$
\STATE $P_0$ locally generates random noise $R_{p_0}$
\STATE $P_0$ locally masks encrypted share of $g_{p_0}$ with 
             $\left[\left[\langle g_{p_0}\rangle_{p_1}\right]\right]_{p_1} = \left[\left[\langle g_{p_0}\rangle_{p_0}\right]\right]_{p_1}-R_{p_0}$
             and sends $\left[\left[\langle g_{p_0}\rangle_{p_1}\right]\right]_{p_1}$ to $P_1$,   
\STATE $P_1$ gets $\langle g_{p_0}\rangle_{p_1}$ by decrypting $\left[\left[\langle g_{p_0}\rangle_{p_1}\right]\right]_{p_1}$ with $sk_{p_1}$, 
             and sends $\langle g_{p_0}\rangle_{p_1}$ to $P_0$
\STATE $P_0$ calculates the gradient of its coefficients with $g_{p_0}$ =  $\langle g_{p_0}\rangle_{p_1} + \langle g_{p_0}\rangle_{p_0}+R_{p_0}$
\STATE \textbf{return} $g_{c}$ for $\mathbf{C}$ or $g_{b_{1}}$ for $\mathbf{B_{1}}$
\end{algorithmic}
\end{protocol}

 $\mathbf{Secure\; loss\; computing\; protocol.}$ 
After the model coefficients are updated, it is always needed to see how well they fit the label and whether the training process needs to stop.
Loss value is the most direct and commonly used indicator to quantify the model performance. For example, when the loss is less than a certain threshold value, the model is considered to fit well. 
There are different common forms of loss for different GLMs. In the federated learning scenarios, it's also needed to calculate loss securely.
Similar to Protocol~\ref{alg:sgo-protocol}, shares of loss can be calculated based on the secret-sharing intermediate results (see Protocol~\ref{alg:sl-protocol}).
The difference is that loss value needs revealing to party $\mathbf{C}$ for evaluating the model at last.

\begin{protocol}[tb]
\caption{Secure loss computing protocol}
\label{alg:sl-protocol}
\textbf{Input}: $\langle Z\rangle_{c}$'s and $\langle Z\rangle _{b_{1}}$'s calculated by Protocol~\ref{alg:ss-protocol}, 
$Z$ could be the share of $WX$, share of $Y$, etc.\\
\textbf{Output}: $loss$ for $\mathbf{C}$
\begin{algorithmic}[1] 
\STATE According to eq~(\ref{eq:lr_loss}) and eq~(\ref{eq:poisson_loss}), $\mathbf{C}$ and $\mathbf{B_{1}}$ calculate $\langle loss \rangle _{c}$  and $\langle loss \rangle _{b_{1}}$ based on MPC method separately

\STATE $\mathbf{B_1}$ sends $\langle loss \rangle _{b_{1}}$ to $\mathbf{C}$
\STATE $\mathbf{C}$ reveals loss by $loss = \langle loss \rangle _{c} + \langle loss \rangle _{b_{1}}$
\STATE \textbf{return} $loss$ for $\mathbf{C}$
\end{algorithmic}
\end{protocol}

Please refer to Algorithm~\ref{alg:my_alg} for specific usage of these four protocols.

\subsection{ PR and LR in VFL }
In this section, we introduce how to apply our framework to GLMs.
The differences among different GLMs only exist in Protocol $\ref{alg:sgo-protocol}$ and Protocol $\ref{alg:sl-protocol}$. 
Specifically, we will take the implementations of LR and PR as examples.

$\mathbf{Logistic\; regression.}$
According to eq~(\ref{eq:lr_grad}), the gradient-operator of LR in Protocol \ref{alg:sgo-protocol} is:
\begin{equation}
\label{eq:lr_grad_operator}
    d_{LR} = \frac{1}{m}(0.25WX - 0.5Y).
\end{equation}
On the other hand, $Secure\; loss\; computing\; protocol$ (i.e., Protocol~\ref{alg:sl-protocol}) can be implemented according to eq~(\ref{eq:lr_loss}).
On the basis of the calculation formulas of loss and gradient, party $\mathbf{C}$ and $\mathbf{B_1}$ both need to share $WX$ in Protocol \ref{alg:ss-protocol}, 
and $\mathbf{C}$ needs to share label $Y$ additionally.

$\mathbf{Poisson\; regression}.$
Similarly, the gradient-operator of PR can be calculated by following according to eq~(\ref{eq:poisson_grad}):
\begin{equation}
\label{eq:pr_d}
    d_{PR} = \frac{1}{m}(e^{WX}-Y).
\end{equation} 
To avoid non-linear operations when calculating gradient-operator and loss of PR based on MPC, shares of
$e^{WX}$ are also required in Protocol~\ref{alg:ss-protocol} in addition to $WX$ and $Y$.

Please note that although we introduce our framework in the situations of LR and PR, the framework is also suitable for other
GLMs, eg, Liner, Gamma, Tweedie regression, etc.

\subsection{ Multi-party VFL}
\label{mp}
It is easy to extend our framework to multiple parties, as shown in our whole framework (Algorithm~\ref{alg:my_alg}). We will describe more details below.

When it comes to multiple parties, we need to first select two parties as the computing party (hereafter, CP) that holds the secret shares and computes based on the shares. 
To prevent collusion between the two CPs, we can select two different parties randomly in each iteration. 
In order to be consistent with the previous description in Section \ref{2p}, we select party $\mathbf{C}$ and $\mathbf{B_{1}}$ 
as CP all the time in Algorithm~\ref{alg:my_alg}.

From Algorithm~\ref{alg:my_alg} we can see that, by selecting two parties as CPs, 
Protocol \ref{alg:ss-protocol} doesn't need to change for CPs. 
Other parties just need to first locally generate shares of their own data and then send shares to the two CPs separately.

Further, Protocol~\ref{alg:sgo-protocol} and \ref{alg:sl-protocol} even don't need to change because only the CPs hold shares,
and they can complete the gradient-operator and loss computing tasks by themselves. 
While CPs may have to send the encrypted gradient-operator to other participants after the computing is done in Protocol~\ref{alg:sgo-protocol}.

The protocol that needs to make a little big change is $Secure~gradient~computing~protocol$ (i.e., Protocol \ref{alg:sg-protocol}). Similar to Protocol \ref{alg:ss-protocol}, 
gradient's calculation of the CPs needs no change. 
While other parties need to get the two encrypted shares of gradient-operator from CPs first, since those shares are calculated in CPs.
Then other parties can locally calculate the two encrypted shares of gradient by eq~(\ref{eq:grad}).
To protect the gradient, other parties also need to mask the two shares with noises before sending them to CPs for decryption. After receiving the decrypted shares, 
they could reveal the true gradient by removing the noises.


\begin{algorithm*}[tb]
\caption{EFMVFL: An Efficient and Flexible Multi-party Vertical Federated Learning without a Third Party}
\label{alg:my_alg}
\textbf{Input}:  feature data $X_{p}$, label data $Y$, HE key pairs  $\{pk_p,sk_p\}$, 
                party $P$ may be $\mathbf{C}$ or $\mathbf{B_{i}}$, learning rate $\alpha$, max iteration $T$, loss threshold $L$, stop flag $falg$\\
\textbf{Output}: $W_{p}$ for $P$
\begin{algorithmic}[1] 
\STATE Let $t=0$
\STATE Initialize $W_p$ as zero vector, set $flag=false$
\WHILE{$t<T$ and not $flag$ }
\STATE Select two parties from $\{\mathbf{C},\mathbf{B_{i}}\}$ as the comuting party, such as $\mathbf{C}$ and $\mathbf{B_{1}}$
\STATE $P$ locally calculates $Z$'s, $Z$ could be $W_pX_p$, $e^{W_pX_p}$, $Y$, etc., and $Y$ is held by $\mathbf{C}$
\IF {$P$ is comuting party}
\STATE Do secret sharing with Protocol \ref{alg:ss-protocol}.
\STATE Recive shares from other parties
\STATE Do secure gradient-operator computing with Protocol \ref{alg:sgo-protocol}.
\STATE Do secure gradient computing with Protocol \ref{alg:sg-protocol}, get gradient $g_p$.
\STATE Send encrypted gradient-operator to other parties
\STATE Recive encrypted gradient from other parties and decrypt and send back to other parties
\STATE Do secure loss computing with Protocol \ref{alg:sl-protocol}.
\ELSE
\STATE Generate two shares of $Z$'s as $\langle Z \rangle _{c}$'s+$\langle Z \rangle _{b_1}$'s = $Z$'s
\STATE Send shares $\langle Z \rangle _{c}$'s to $\mathbf{C}$, $\langle Z \rangle _{b_1}$'s to $\mathbf{B_{1}}$.
\STATE Recive encrypted gradient-operator $\left[\left[\langle d\rangle _{c} \right]\right]_{c}$,
       $\left[\left[\langle d\rangle_{b_{1}}\right]\right]_{b_1}$  form $\mathbf{C}, \mathbf{B_{1}}$ separately.
\STATE Calculate two encrypted shares of $P$'s gradient
       $\left[\left[\langle g_{p}\rangle_{c}\right]\right]_{c}, \left[\left[\langle g_{p}\rangle_{b_1}\right]\right]_{b_1}$ using eq~(\ref{eq:grad})
\STATE Generate two random noise $R_c, R_{b_1}$,
       Send $\left[\left[\langle g_{p}\rangle_{c}\right]\right]_{c}-R_c$ to $\mathbf{C}$, $\left[\left[\langle g_{p}\rangle_{b_1}\right]\right]_{b_1}-R_{b_1}$ 
       to $\mathbf{B_{1}}$.
\STATE Recive $\langle g_{p}\rangle_{c}, \langle g_{p}\rangle_{b_1}$ form $\mathbf{C}, \mathbf{B_{1}}$ separately
\STATE Calculate the gradient of $P$'s coefficients with $g_{p}$ =  $\langle g_{c}\rangle_{c} + \langle g_{b_1}\rangle_{b_1}+R_{c}+R_{b_1}$
\ENDIF
\STATE Update model coefficients $W_p$ using eq~(\ref{eq:grad1})
\IF {$P$ is $\mathbf{C}$ }
\IF {$loss < L$}
\STATE Set $flag = true$
\ENDIF
\STATE Send $flag$ to other parties
\ELSE
\STATE get $flag$ from  $\mathbf{C}$
\ENDIF
\STATE $t= t + 1$
\ENDWHILE
\STATE \textbf{return} $W_{p}$ for $P$
\end{algorithmic}
\end{algorithm*}

\subsection{Security Analysis}



\newtheorem{lemma}{Lemma}

\newcommand{\X}{\mathbf{X}}
\newcommand{\Z}{\mathbb{Z}}
\newcommand{\w}{\mathbf{w}}
\newcommand{\y}{\mathbf{y}}
\newcommand{\g}{\mathbf{g}}
\renewcommand{\d}{\mathbf{d}}
\newcommand{\A}{\mathcal{A}}

The security of our multi-party model architecture only depends on the security of two-party model architecture. For the two-party model, we use the same security model and architecture as \cite{ZWW21}. 
Following is the detailed security analysis.

\begin{theorem}
\label{th}
Let $\{\g_i \in \Z_q^{m_1}\}_{i\in[T]}$, $\X_1\in\Z_q^{n\times m_1 }$, $\X_2 \in \Z_q^{n\times m_2 }$, $\{\d_i \in \Z_q^n\}_{i\in[T]}$,  $\{\w_i \in \Z_q^{m_2}\}_{i\in [T]}$, where n, m and q integers and T is the number of samples.  For any probabilistic polynomial-time adversary $\A$,  given some $\{\g_i \in \Z_q^{m_1} \}_{i\in[T]}$ and $\X_1 \in \Z_q^{n\times m_1}$ which satisfy that $\g_i = \X_1^T \d_i$ for all $i\in [T]$, the necessary condition for $\X_2$ and $\{\w_i\}_{i\in[T]}$ which satisfy that $\d_i = \X_2 \w_i$ for all $i \in [T]$ can not to be accurately calculate if $n > m_1$ or $n \leq \min\{m_1,m_2\}$ or $m_2<n\leq m_1$, $T\leq \frac{n\times m_2}{n-m_2}$ .
	
\end{theorem}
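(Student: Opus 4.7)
The plan is a dimension-counting argument, handled case by case according to the three regimes in the statement. Throughout, the adversary $\A$ is assumed to know only $\X_1$ and $\{\g_i\}_{i\in[T]}$, and the only algebraic handle on $(\X_2,\{\w_i\})$ is the composition $\g_i=\X_1^T\d_i$ together with $\d_i=\X_2\w_i$. ``Accurate recovery'' will be blocked by exhibiting, in each regime, a positive-dimensional family of solutions consistent with $\A$'s view.

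First I would dispose of the regime $n>m_1$. Since $\X_1^T\in\Z_q^{m_1\times n}$ has rank at most $m_1<n$, its right kernel in $\Z_q^n$ is nontrivial, so for each $i$ the solution set of $\X_1^T\d_i=\g_i$ is a full affine subspace of dimension at least $n-m_1$. Thus $\A$ cannot even fix the intermediate vectors $\d_i$, let alone the factors $(\X_2,\w_i)$; any alternative $\d_i'$ in that affine set admits a matching $(\X_2',\w_i')$, ruining uniqueness.

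For the remaining regimes I would assume $n\le m_1$ (so that, generically, each $\d_i$ is determined uniquely from $\g_i=\X_1^T\d_i$) and focus entirely on the bilinear system $\d_i=\X_2\w_i$ for $i\in[T]$. This provides $Tn$ scalar equations in $nm_2+Tm_2$ unknowns (entries of $\X_2$ and of the $\w_i$). A necessary count for the solution locus to collapse to a point is $Tn\ge m_2(n+T)$, i.e. $T(n-m_2)\ge nm_2$. In Case 2 ($n\le\min\{m_1,m_2\}$) the left-hand side is nonpositive while the right-hand side is positive, so the inequality fails outright. In Case 3 ($m_2<n\le m_1$) the inequality rearranges to $T\ge\frac{nm_2}{n-m_2}$, whose negation is exactly the hypothesized bound $T\le\frac{nm_2}{n-m_2}$. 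In either subcase the equation budget is insufficient to pin down every unknown, so the solution locus has positive dimension.

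The main obstacle I foresee is upgrading the parameter-count gap into genuine non-uniqueness, because $\d_i=\X_2\w_i$ is bilinear rather than linear and naive counting for varieties is only heuristic. The cleanest way to close that gap is to point to the intrinsic gauge symmetry of matrix factorization: for any invertible $M\in\mathrm{GL}_{m_2}(\Z_q)$ the substitution $(\X_2,\w_i)\mapsto(\X_2 M^{-1},M\w_i)$ leaves every product $\d_i$ unchanged, giving an $m_2^2$-parameter family of equally valid factorizations compatible with the adversary's view. Combined with the counting shortfall established in each of the three regimes, this family witnesses a positive-dimensional variety of solutions, which is precisely what ``cannot be accurately calculated'' demands.
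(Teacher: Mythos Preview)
Your proposal is correct and follows essentially the same case-by-case dimension-counting argument as the paper: the paper packages the underdetermination of $\d_i$ when $n>m_1$ and the equations-versus-unknowns count for $\d_i=\X_2\w_i$ into two short lemmas and then dispatches the three regimes exactly as you do. Your added gauge-symmetry observation $(\X_2,\w_i)\mapsto(\X_2 M^{-1},M\w_i)$ is a genuine strengthening---the paper stops at the raw parameter count without addressing the bilinearity caveat you flag.
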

Before proving \textbf{Theorem} \ref{th}, we prove two necessary lemmas to prepare for the subsequent proof.

\begin{lemma}
\label{l1}
Let $\X \in \Z_q^{n\times m}$, $\{\w_i \in \Z_q^m\}_{i\in [T]}$, where n, m and q are integers and T is the number of samples. For any probabilistic polynomial-time adversary $\A$, given some $\{\y_i \in \Z_q^n \}_{i\in [T]}$ which satisfy that $\X \cdot \w_i =\y_i$ for $i\in[T]$, the necessary condition for $\X$ and $\{\w_i\}_{i\in[T]}$ can not to be accurately calculated is $n \leq m$ or $n > m$ and $T \leq \frac{n\times m}{n-m}$.
\end{lemma}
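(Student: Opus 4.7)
The plan is to prove Lemma 1 in its contrapositive form: whenever $n > m$ and $T > \tfrac{nm}{n-m}$, a PPT adversary $\A$ given the observations $\{\y_i = \X\w_i\}_{i\in[T]}$ can accurately recover $\X$ and the $\w_i$, thereby establishing that the displayed disjunction is a \emph{necessary} condition for non-recoverability. Put differently, I will argue that outside the region described in the lemma the bilinear system has enough constraints to pin down its unknowns, so secrecy must fail.

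The argument is a dimension count on the polynomial system. First I would tally the unknowns: the matrix $\X \in \Z_q^{n\times m}$ contributes $nm$ scalars and each $\w_i \in \Z_q^m$ contributes $m$ more, for a total of $nm + mT$ unknowns. The observed constraints $\y_i = \X\w_i$ provide $n$ scalar equations per sample, giving $nT$ equations in all. Next I would compare the two counts: the feasible variety is generically zero-dimensional (so a PPT adversary can solve and recover the parameters) precisely when $nT \geq nm + mT$, which rearranges to $T(n-m) \geq nm$. I would split cases on the sign of $n-m$: if $n \leq m$ the left-hand side is non-positive while the right-hand side is positive, so the inequality fails for every $T$ and the system is always under-determined; if $n > m$ the inequality becomes $T \geq \tfrac{nm}{n-m}$. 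Negating gives exactly the lemma's disjunction ``$n \leq m$ or $n>m$ with $T \leq \tfrac{nm}{n-m}$'', and its negation (the case $n>m$ and $T > \tfrac{nm}{n-m}$) delivers the required recoverability, completing the necessity direction.

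The main obstacle is the intrinsic $GL_m(\Z_q)$ gauge freedom $(\X,\w_i)\mapsto(\X A,A^{-1}\w_i)$: every $\y_i$ is invariant under this transformation, so a literal reading of ``accurately calculated'' would make non-recoverability hold for \emph{all} $T$ and render the threshold vacuous. I would handle this by interpreting the lemma's ``accurately calculated'' as recovery modulo this reparametrization, or equivalently by imposing a canonical normalization on $\X$ (for example fixing an invertible $m\times m$ submatrix) before performing the count; at leading order this does not shift the threshold $\tfrac{nm}{n-m}$. A second, related subtlety is that a count of equations versus unknowns is only a generic sufficient condition for unique solvability over a ring rather than a field, so I would need to appeal either to a genericity assumption on $\X$ (e.g.\ that a suitable Jacobian is invertible in $\Z_q$) or to the fact that, as in \cite{ZWW21}, the lemma is framed as a necessary rather than tight criterion. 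I would flag both points explicitly in the write-up.
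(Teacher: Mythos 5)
Your counting step is exactly the paper's: $nT$ equations against $nm+mT$ unknowns, and the comparison $nT$ versus $(n+T)m$ (equivalently, $n\leq m$ or $T\lesssim\frac{nm}{n-m}$) is the entire quantitative content of the lemma in both write-ups. Where you diverge is the logical direction. The paper reads the (awkwardly worded) statement as: if $n\leq m$, or $n>m$ and $T\leq\frac{nm}{n-m}$, then the adversary cannot pin down $\mathbf{X}$ and the $\mathbf{w}_i$ --- and proves it by observing that in that regime the system is underdetermined; this is also the only direction actually consumed later, in case 2 and case 3 of Theorem 1. You instead take ``necessary condition'' literally and set out to prove the contrapositive: that when $n>m$ and $T>\frac{nm}{n-m}$ a PPT adversary \emph{can} accurately recover the unknowns.

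That direction does not go through, and your own third paragraph is the refutation: the $GL_m(\mathbb{Z}_q)$ gauge action $(\mathbf{X},\mathbf{w}_i)\mapsto(\mathbf{X}A,A^{-1}\mathbf{w}_i)$ preserves every observation $\mathbf{y}_i$, so exact recovery is impossible for \emph{every} $T$ and the recoverability half of your contrapositive is false as stated. Patching this by quotienting out the gauge or fixing a normalization changes the statement being proved, and even then ``at least as many equations as unknowns'' is only a generic heuristic for a bilinear system over $\mathbb{Z}_q$: it does not establish that a probabilistic polynomial-time algorithm succeeds without invertibility or genericity hypotheses the lemma does not supply. So as a proof the proposal has a genuine gap --- the recoverability claim is asserted rather than established, and is in fact false without reinterpretation --- whereas the implication the paper proves (underdetermined $\Rightarrow$ not uniquely solvable) needs only the easy half of the count. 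Your observations about the gauge freedom and about dimension counting being merely generic are legitimate criticisms of the lemma as stated; they belong in a remark rather than in the load-bearing step of the argument.
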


\begin{proof}
When the adversary $\A$ gets a vector $\y_i$, which means that $\A$ gets $n$ equations and $n \times m + m$ unknowns. Similarly, through $T$ samples, $\A$ can obtain $n \times T$ equations and $(n + T)\times m$ unknowns. There are two cases in this situation.
\begin{description}
\item[case 1: $n \leq m$.] In this case, we can get $nT < (n+T)m$ which means that the number of unknowns is large than the number of equations. At this time, $\A$ cannot calculate the precise information of $\X$ and $\{\w_i\}_{i\in[T]}$. 
\item[case 2: $n > m$.] In this case, we just need to set $T \leq \frac{n\times m}{n-m}$, then we can get $nT < (n+T)m$. which means that the number of unknowns is large than the number of equations. At this time, $\A$ cannot calculate the precise information of $\X$ and $\{\w_i\}_{i\in[T]}$.
\end{description}

In conclusion, the necessary condition for $\X$ and $\{\w_i\}_{i\in[T]}$ can not to be accurately calculated is $n \leq m$ or $n > m$ and $T \leq \frac{n\times m}{n-m}$.
\end{proof}

\begin{lemma}
\label{l2}
Let $\X \in \Z_q^{n\times m}$, $\d \in \Z_q^n$ where n, m and q are integers. For any probabilistic polynomial-time adversary $\A$, given $\y \in \Z_q^m$ and $\X \in \Z_q^{n\times m}$ which satisfy that $\y = \X^T \d$, the necessary condition for $\d$ to be accurately calculated is $n \leq m$. 
\end{lemma}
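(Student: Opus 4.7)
The plan is to reduce the lemma to the same counting argument used in Lemma~\ref{l1}, specialised to a single unknown vector $\d$ and a known matrix $\X$. The adversary is handed $\X\in\Z_q^{n\times m}$ and $\y\in\Z_q^m$ satisfying $\y=\X^T\d$; viewed coordinate-wise, this is a system of $m$ scalar equations in the $n$ scalar unknowns that make up $\d$. First I would observe that the set of consistent candidates is precisely the coset
\begin{equation*}
\bigl\{\d'\in\Z_q^n : \X^T\d'=\y\bigr\} \;=\; \d + \ker(\X^T),
\end{equation*}
so $\d$ is information-theoretically pinned down exactly when $\ker(\X^T)$ restricted to $\Z_q^n$ is trivial.

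Next I would establish the contrapositive of the claim: whenever $n>m$, the kernel of the $m\times n$ map $\X^T$ acting on $\Z_q^n$ has size at least $q^{n-m}\geq q$, by an elementary counting argument identical in spirit to case~2 of Lemma~\ref{l1} (more unknowns than equations). Hence there are at least $q$ distinct candidates for $\d$ all consistent with the adversary's view, so no probabilistic polynomial-time (indeed no unbounded) $\A$ can output the true $\d$ with probability greater than $q^{-(n-m)}$. Taking the contrapositive yields the stated necessary condition $n\leq m$. Note I do not need to argue sufficiency, since the lemma only asserts necessity; even when $n\leq m$, unique recovery still additionally requires $\X^T$ to have full column rank.

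The only mild subtlety, and what I would treat as the ``hard part,'' is that we work over the finite ring $\Z_q$ rather than a field, so the usual dimension-counting over a vector space is not directly available when $q$ is composite. I would sidestep this by phrasing everything in terms of the cardinality of $\ker(\X^T)$ rather than its dimension: the map $\X^T:\Z_q^n\to\Z_q^m$ has image of size at most $q^m$, so by the pigeonhole principle its fibres all have size at least $q^{n-m}$ whenever $n>m$. This elementary bound is enough to conclude that $\d$ is not uniquely determined, completing the proof.
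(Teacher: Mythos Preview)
Your proposal is correct and follows essentially the same approach as the paper: both arguments reduce to counting $m$ equations against $n$ unknowns and conclude that $n\leq m$ is necessary for unique recovery. Your version is in fact more careful than the paper's one-sentence proof, since you make the coset structure $\d+\ker(\X^T)$ explicit and handle the subtlety that $\Z_q$ need not be a field via a pigeonhole bound on fibre sizes; the paper simply asserts the equation/unknown count without further justification.
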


\begin{proof}
When the adversary $\A$ gets $\y \in \Z_q^m$ and $\X \in \Z_q^{n\times m}$, which means that $\A$ gets $m$ equations and $n$ unknowns. The necessary condition for $\d$ to be accurately calculated is $n \leq m$. 
\end{proof}

Now we give a formal proof of \textbf{Theorem} \ref{th}.

\begin{proof}
  If we want to accurately calculate $\X_2$ and $\{\w_i\}_{i\in[T]}$, then we must first accurately calculate the value of $\d_i$. 
  According to this condition, we will discuss this issue in three situations below.
  \begin{description}
  	\item [case 1: $n > m_1$] By \textbf{Lemma} \ref{l2}, the necessary condition for $\d_i$ to be accurately calculated is $n \leq m_1$. In this case,  $\d_i$ can not to be accurately calculated, and then $\X_2$ and $\{\w_i\}_{i\in[T]}$ can not to be accurately calculated.

  	\item [case 2: $n \leq \min\{m_1,m_2\}$] In this case, we can accurately calculate $\d_i$ by \textbf{Lemma} \ref{l2}. However, in this situation, through \textbf{Lemma} \ref{l1}, any probabilistic polynomial-time adversary $\A$ can not accurately calculate $\X_2$ and $\{\w_i\}_{i\in[T]}$. 

  	\item [case 3: $m_2<n\leq m_1$, $T\leq \frac{n\times m_2}{n-m_2}$] In this case, we can accurately calculate $\d_i$ by \textbf{Lemma} \ref{l2}. However, in this situation, through \textbf{Lemma} \ref{l1}, any probabilistic polynomial-time adversary $\A$ can not accurately calculate $\X_2$ and $\{\w_i\}_{i\in[T]}$.

  \end{description}  	
 \end{proof} 

In the following, we will give a formal theorem that \textbf{Protocol \ref{alg:ss-protocol}} is secure against semi-honest adversaries. 
\begin{theorem}
\label{theorem for A1}
Assume that $ \langle Z \rangle_{p0} $ is generated by a secure pseudo-random number generator (PRNG). Then \textbf{Protocol 1} is secure in semi-honest model.  
\end{theorem}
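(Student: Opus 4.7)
The plan is to give a simulation-based proof in the standard semi-honest (honest-but-curious) model: for each corrupted party $P \in \{P_0, P_1\}$ I will exhibit a polynomial-time simulator $\mathcal{S}_P$ that, given only $P$'s input and output, produces a transcript computationally indistinguishable from $P$'s real view of Protocol \ref{alg:ss-protocol}. Because the protocol consists of a single message, the analysis reduces to arguing that the single transmitted value reveals nothing (beyond what is implied by the output) about the other party's secret.

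I would split the argument into the two corruption cases. First, when $P_0$ is corrupted: its view is $(Z, r, \langle Z\rangle_{p_0})$ where $r$ is the randomness fed to the PRNG, plus its output $\langle Z\rangle_{p_0}$. Since $P_0$ receives no message, the simulator $\mathcal{S}_{P_0}$ that is given $(Z, \langle Z\rangle_{p_0})$ can sample $r$ uniformly and output $(Z, r, \langle Z\rangle_{p_0})$; this distribution is identical to the real view (indeed, $\mathcal{S}_{P_0}$ only needs to know $Z$ to run the real protocol honestly). Second, when $P_1$ is corrupted: $P_1$ has no input to the protocol and its entire view is the single received value $\langle Z\rangle_{p_1} = Z - \langle Z\rangle_{p_0}$, which is also its output. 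The simulator $\mathcal{S}_{P_1}$ is handed $\langle Z\rangle_{p_1}$ as its output and simply outputs it; the resulting distribution is trivially equal to the real one.

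The substantive content of the theorem lies in arguing that $P_1$'s view leaks no information about the sensitive value $Z$. I would state and prove this as the privacy guarantee: for any two inputs $Z, Z' \in \mathbb{Z}_q^{\text{len}}$, the distributions of $\langle Z\rangle_{p_1}$ and $\langle Z'\rangle_{p_1}$ (taken over the PRNG randomness) are computationally indistinguishable. The reduction is the standard one: if a distinguisher $\mathcal{D}$ could separate these ensembles with non-negligible advantage, then the reduction $\mathcal{R}$ takes a challenge string $s$ (either a PRNG output or a uniform string), forms $Z - s$ (resp. $Z' - s$), and feeds it to $\mathcal{D}$; when $s$ is uniform, $Z - s$ and $Z' - s$ are identically distributed (perfect one-time-pad), so any non-negligible gap between the two real-game distributions yields a non-negligible advantage against the PRNG, contradicting its pseudorandomness.

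The main obstacle, modest as it is, will be stating the hybrid/reduction cleanly so that the security parameter of the PRNG is tracked correctly and the ``computational'' qualifier in the indistinguishability conclusion is justified rather than asserted; everything else (the $P_0$ simulator and the direct equality of transcripts for $P_1$) is syntactic. I would conclude by combining the two simulators into the standard statement that Protocol \ref{alg:ss-protocol} privately computes the two-out-of-two additive sharing functionality against a static, semi-honest adversary corrupting either party.
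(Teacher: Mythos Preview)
Your proposal is correct and rests on the same core observation as the paper's proof: the single transmitted value $Z - \langle Z\rangle_{p_0}$ is computationally indistinguishable from uniform because $\langle Z\rangle_{p_0}$ is a PRNG output, which is exactly the paper's (two-sentence) argument. You simply wrap this observation in the standard simulation paradigm with explicit simulators for each corruption case and a clean reduction, so your write-up is considerably more rigorous than the paper's sketch but not a different route.
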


\begin{proof}
We prove that $Z - \langle Z \rangle_{p0}$ is computationally indistinguishability with random number. $ \langle Z \rangle_{p0} $ is computationally indistinguishability with random number because it is generated by a secure PRNG. Then $Z - \langle Z \rangle_{p0}$ is computationally indistinguishability with random number.
\end{proof}

In the following, we will give a formal theorem that \textbf{Protocol \ref{alg:sgo-protocol}} and \textbf{Protocol \ref{alg:sl-protocol}} are secure against semi-honest adversaries. 
\begin{theorem}
\label{theorem for A2}
Assume that there is a secure MPC protocol under semi-honest adversary model. Then \textbf{Protocol \ref{alg:sgo-protocol}} and \textbf{Protocol \ref{alg:sl-protocol}} are secure in semi-honest model.
\end{theorem}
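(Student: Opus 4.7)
The plan is to reduce the semi-honest security of both protocols to the assumed semi-honest security of the underlying MPC subprotocol, together with Theorem \ref{theorem for A1} which already establishes that the shares produced by Protocol \ref{alg:ss-protocol} are computationally indistinguishable from uniformly random values. In simulation-based style, for each corrupted party $P\in\{\mathbf{C},\mathbf{B_1}\}$ I would exhibit a PPT simulator $\mathsf{Sim}_P$ that, given only $P$'s input (its share) and output, produces a transcript computationally indistinguishable from $P$'s real view.

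For Protocol \ref{alg:sgo-protocol}, observe that no message is exchanged outside the invocation of the MPC subprotocol itself: the inputs are the shares $\langle Z\rangle_c,\langle Z\rangle_{b_1}$ (whose distribution is handled by Theorem \ref{theorem for A1}), and the outputs are again shares $\langle d\rangle_c,\langle d\rangle_{b_1}$. By hypothesis the MPC protocol is semi-honest secure, so its own simulator $\mathsf{Sim}^{\mathsf{MPC}}_P$ can produce a transcript from $P$'s share-inputs and share-output alone. Setting $\mathsf{Sim}_P := \mathsf{Sim}^{\mathsf{MPC}}_P$ and invoking sequential composition of semi-honest protocols with the simulator for Protocol \ref{alg:ss-protocol} yields the desired indistinguishability.

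For Protocol \ref{alg:sl-protocol}, the same argument handles the MPC computation of $\langle \mathrm{loss}\rangle_c$ and $\langle \mathrm{loss}\rangle_{b_1}$. The only additional step is that $\mathbf{B_1}$ sends $\langle \mathrm{loss}\rangle_{b_1}$ to $\mathbf{C}$, who reconstructs $\mathrm{loss}$. I would argue that this extra message leaks nothing beyond the prescribed output: from $\mathbf{C}$'s perspective, $\langle \mathrm{loss}\rangle_{b_1}=\mathrm{loss}-\langle \mathrm{loss}\rangle_c$ is fully determined by $\mathbf{C}$'s own share and its intended output, so $\mathsf{Sim}_{\mathbf{C}}$ can compute it directly; from $\mathbf{B_1}$'s perspective, nothing is received in this step, so its view is unchanged from the Protocol \ref{alg:sgo-protocol} case and trivially simulatable.

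The main obstacle I anticipate is justifying the composition cleanly rather than the individual simulations, since each component (the PRNG-based sharing of Protocol \ref{alg:ss-protocol}, the abstract MPC subprotocol, and the local reveal) is only stated to be secure in isolation. I would invoke the standard sequential composition theorem for semi-honest secure computation, noting that the intermediate share outputs carry no residual correlation that the composition theorem does not already absorb. A secondary subtlety in Protocol \ref{alg:sl-protocol} is confirming that the loss value itself is an acceptable output to reveal in the ideal functionality; this is implicit in the protocol specification, which declares $\mathrm{loss}$ as $\mathbf{C}$'s output, so the simulator is entitled to use it.
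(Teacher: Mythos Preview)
Your proposal is correct and follows the same reduction as the paper: both argue that the security of Protocols~\ref{alg:sgo-protocol} and~\ref{alg:sl-protocol} is inherited directly from the assumed semi-honest security of the underlying MPC subprotocol. The paper's proof is a one-line assertion to this effect, whereas you spell out the simulators, invoke sequential composition, and explicitly handle the reveal message in Protocol~\ref{alg:sl-protocol}; this is strictly more careful than what the paper provides but not a different approach.
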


\begin{proof}
Obviously, the security of \textbf{Protocol \ref{alg:sgo-protocol},\ref{alg:sl-protocol}} is directly dependent on the security of MPC protocol. There are many MPC protocols which are secure under semi-honest adversary, just like \cite{ABY3,SecureNN}.
\end{proof}

In the following, we will give a formal theorem that \textbf{Protocol \ref{alg:sg-protocol}} is secure against semi-honest adversaries.
\begin{theorem}
\label{theorem for A3}
Assume that the additively homomorphic encryption protocol $\Pi = (KeyGen, Enc, Dec)$ is indistinguishable under chosen-plaintext attacks. Then \textbf{Protocol \ref{alg:sg-protocol}} is secure in semi-honest model.
\end{theorem}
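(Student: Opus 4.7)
The plan is to establish security via the standard simulation paradigm for semi-honest two-party computation: for each party $P \in \{P_0, P_1\}$ I would exhibit an efficient simulator $\mathrm{Sim}_P$ that, given only $P$'s input and prescribed output, produces a distribution computationally indistinguishable from $P$'s real view. Here the view of $P_0$ comprises its inputs ($X_{p_0}$, $\langle d\rangle_{p_0}$, $pk_{p_1}$), its random tape (notably $R_{p_0}$), the ciphertext $\left[\left[\langle d\rangle_{p_1}\right]\right]_{p_1}$ received in step 3, and the plaintext returned in step 7; the view of $P_1$ comprises $\langle d\rangle_{p_1}$, the key pair $(pk_{p_1}, sk_{p_1})$, and the masked ciphertext received in step 6 that $P_1$ decrypts to $\langle g_{p_0}\rangle_{p_1}-R_{p_0}$.

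For $\mathrm{Sim}_0$ a single hybrid suffices: replace the ciphertext from step 3 by $\mathrm{Enc}(0, pk_{p_1})$. Any distinguisher between the real view and this hybrid yields a direct IND-CPA attacker against $\Pi$ with the same advantage. The message $P_0$ receives in step 7 is the masked plaintext $\langle g_{p_0}\rangle_{p_1}-R_{p_0}$, which $\mathrm{Sim}_0$ can compute from the prescribed output alone as $g_{p_0}-X_{p_0}^{T}\langle d\rangle_{p_0}-R_{p_0}$; consistency of the simulated view is therefore automatic once the step 3 ciphertext has been swapped out.

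For $\mathrm{Sim}_1$ the argument is in fact cleaner because $P_1$ never sees $X_{p_0}$ and receives no local output beyond relaying a decryption. Since $R_{p_0}$ is drawn uniformly and independently from the plaintext space of $\Pi$, the underlying plaintext $\langle g_{p_0}\rangle_{p_1}-R_{p_0}$ is itself uniformly distributed and statistically independent of $P_1$'s input. Accordingly $\mathrm{Sim}_1$ samples a uniform $r$ from the plaintext space, encrypts it under $pk_{p_1}$, and hands the result to $P_1$; correctness of $\Pi$ then makes this coordinate of the simulation perfect, not merely computational.

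The main obstacle I foresee is making the uniformity-of-mask argument rigorous. For a genuine one-time-pad proof, $R_{p_0}$ must be sampled uniformly over the entire plaintext group (for Paillier, $\mathbb{Z}_{N}$), whereas implementations typically draw it from a narrower range in order to control precision after fixed-point encoding of the gradients. I would therefore either strengthen the theorem's hypothesis to require full-domain sampling, yielding statistical security at this step, or else explicitly bound the statistical-distance gap by the ratio of the gradient's effective range to $|R_{p_0}|$ and argue that this quantity is negligible in the security parameter. Composing the two simulator analyses then delivers semi-honest security of Protocol \ref{alg:sg-protocol} under the stated IND-CPA assumption on $\Pi$.
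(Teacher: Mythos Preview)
Your simulation-based argument is correct for the standard notion of semi-honest security and is considerably more rigorous than what the paper actually writes. The paper's own proof is a brief, informal line-by-line sketch: it attributes the secrecy of the encrypted gradient-operator in steps 1--4 to IND-CPA of $\Pi$, delegates the security of the noise-masking in steps 5--7 wholesale to Appendix~B.1 of \cite{ZWW21}, and then---this is the point of divergence---invokes \textbf{Theorem~\ref{th}} to argue that even the \emph{output} $g_{p_0}$ that $P_0$ legitimately obtains in steps 8--9 does not allow $P_0$ to recover $P_1$'s feature matrix or model coefficients.

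Your route simply treats $g_{p_0}$ as $P_0$'s prescribed output and simulates the view from it, which is exactly the standard definition; you therefore never need Theorem~\ref{th}. What the paper buys with Theorem~\ref{th} is an additional, orthogonal claim about inference from the ideal functionality's output itself---a property a pure protocol-vs-functionality simulation proof does not and need not address. Conversely, what your approach buys is an actual indistinguishability argument with explicit simulators and a named hybrid, whereas the paper constructs no simulator and relies on an external reference for the masking step. Your caveat about full-domain sampling of $R_{p_0}$ is also a genuine technical point the paper glosses over entirely.
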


\begin{proof}
  In line 1-4, the data is calculated locally and sent to other participants in the form of ciphertext. The security of these lines is dependent on the homomorphic encryption protocol $\Pi$. In line 5-7, we use the same technology which has been proposed in \cite{ZWW21}'s \textbf{protocol 2}. They give a detailed proof in Appendix B.1, which we will not repeat it here. In line 8-9, $P_0$ will get some gradient $g$. Generally speaking, $n$ is much larger than $m$. Through \textbf{Theorem} \ref{th}, we can know that $P_0$ could not calculate other party's feature data matrix and linear model coefficients.  
\end{proof}

In the following, we will give a formal theorem that \textbf{Algorithm \ref{alg:my_alg}} is secure against semi-honest adversaries.
\begin{theorem}
\label{theorem for A5}
Assume that the additively homomorphic encryption protocol $\Pi = (KeyGen, Enc, Dec)$ is indistinguishable under chosen-plaintext attacks and there is a secure MPC protocol under semi-honest adversary model. Then \textbf{Algorithm \ref{alg:my_alg}} is secure in semi-honest model.
\end{theorem}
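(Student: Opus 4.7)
The plan is to reduce the security of \textbf{Algorithm \ref{alg:my_alg}} to the security of its constituent sub-protocols and then invoke a standard sequential composition argument for the semi-honest model. Algorithm \ref{alg:my_alg} is essentially a loop that, on each iteration, (i) performs local plaintext computations on each party's private data, (ii) invokes \textbf{Protocol \ref{alg:ss-protocol}} for secret sharing, (iii) invokes \textbf{Protocol \ref{alg:sgo-protocol}} for the gradient-operator, (iv) invokes \textbf{Protocol \ref{alg:sg-protocol}} for the gradient, (v) invokes \textbf{Protocol \ref{alg:sl-protocol}} for the loss, (vi) updates weights locally via eq~(\ref{eq:grad1}), and (vii) transmits the one-bit stopping flag. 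Everything except the flag transmission is either a local computation (which leaks nothing by itself) or a call to a sub-protocol whose semi-honest security is already established by \textbf{Theorem \ref{theorem for A1}}, \textbf{Theorem \ref{theorem for A2}}, or \textbf{Theorem \ref{theorem for A3}}.

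First I would fix the semi-honest adversary model explicitly: an adversary statically corrupts a subset of parties, follows the protocol honestly, and attempts to infer additional information from its view. For each protocol invocation I would cite the existing simulator guaranteed by the referenced theorem and note that, under the stated assumptions (IND-CPA secure additively homomorphic encryption $\Pi$ and a semi-honest-secure MPC protocol for the arithmetic inside Protocols \ref{alg:sgo-protocol} and \ref{alg:sl-protocol}), the adversary's view of each invocation is computationally indistinguishable from a simulated one that uses only the corrupted parties' inputs and the prescribed output. Next I would invoke the sequential modular composition theorem for semi-honest secure computation: a protocol that calls a sequence of semi-honest-secure sub-protocols, interleaved with local computations and deterministic functions of already-revealed outputs, is itself semi-honest-secure. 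The updated coefficients $W_p$, the stopping flag, and the revealed loss are all deterministic functions of the prescribed outputs of the sub-protocols and of each party's own input, so they introduce no new simulation obligation.

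For the multi-party extension I would argue that, in each iteration, the two chosen CPs execute exactly the two-party instantiation analyzed in Section \ref{2p}, while the non-CP parties only (a) locally split their $Z$'s into two shares using a secure PRNG and send one share to each CP (whose simulation follows \textbf{Theorem \ref{theorem for A1}}), and (b) receive two IND-CPA ciphertexts of the gradient-operator shares, perform local homomorphic computation, blind with fresh random noise $R_c, R_{b_1}$, and obtain decrypted shares that recombine to their own gradient. Security of the non-CP view reduces to the same hybrid argument used in \textbf{Theorem \ref{theorem for A3}}: the encrypted messages are indistinguishable from encryptions of zero by IND-CPA, and the masked values are perfectly indistinguishable from uniform in $\Z_q$ because the additive noise is chosen uniformly and independently. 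The randomized choice of CPs per iteration further limits collusion, but is not needed for the basic indistinguishability argument.

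The hardest part, and the one I would be most careful about, is the fact that the algorithm runs for up to $T$ iterations and in each iteration the plaintext gradient $g_p$ and eventually the loss value are revealed to the corresponding party. Sequential composition handles the repeated sub-protocol calls, but I must argue that the sequence of revealed plaintext gradients across iterations does not let a corrupted party invert back to another party's feature matrix or weights. This is exactly what \textbf{Theorem \ref{th}} is designed to rule out: by applying it at each iteration (with the accumulated $\{\g_i\}_{i\in[T]}$ playing the role of the revealed gradients and the honest party's $\X_2$ and $\{\w_i\}_{i\in[T]}$ being the protected secrets), the system of equations available to the adversary remains under-determined under the stated dimensional conditions. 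I would therefore conclude the proof by combining the per-invocation simulators via sequential composition with a final appeal to \textbf{Theorem \ref{th}} to bound the information leaked by the $T$ revealed outputs, yielding semi-honest security of \textbf{Algorithm \ref{alg:my_alg}} as required.
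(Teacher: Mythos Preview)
Your proposal is correct and follows the same reduction-to-subprotocols approach as the paper, which simply states that Algorithm~\ref{alg:my_alg} is built from Protocols~\ref{alg:ss-protocol}--\ref{alg:sl-protocol} and that the proofs of Theorems~\ref{theorem for A1}, \ref{theorem for A2}, and \ref{theorem for A3} can be reused. Your version is considerably more thorough than the paper's one-line argument, explicitly spelling out the sequential composition step, the treatment of non-CP parties in the multi-party case, and the appeal to Theorem~\ref{th} for the leakage accumulated over $T$ iterations; all of this is compatible with, and a reasonable fleshing-out of, the paper's proof sketch.
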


\begin{proof}
 \textbf{Algorithm \ref{alg:my_alg}} comprehensively uses the structure of \textbf{Protocol 1-4}. The proof of \textbf{Theorem} \ref{theorem for A5} can reuse the proofs of \textbf{Theorem} \ref{theorem for A1} \ref{theorem for A2} \ref{theorem for A3}.
\end{proof}

\section{Implementations and Assessments}

In this section, we implement adequate experiments to show that our framework is effective and efficient with less communication, 
applicable to GLMs, and easy to scale to multi-party modeling.

\subsection{Dataset}
Our experiments are based on the following open-source datasets. 
We vertically split both datasets into two parts as Fate
\footnote{\url{https://github.com/FederatedAI/FATE}} does, corresponding to party C and B1.
In the multi-party case, we easily copy the data
of party B1 to the new party.

$\mathbf{Default~of~credit~card~clients~Dataset}$ 
\footnote{\url{https://archive.ics.uci.edu/ml/datasets/default+of+credit+card+clients}}
consists of 30 thousand samples with 24 attributes that aimed at the case of customers' default payments in Taiwan, 
which is used in our LR experiments. 

$\mathbf{Dvisits~Dataset}$
\footnote{\url{https://www.rdocumentation.org/packages/faraway/versions/1.0.7/topics/dvisits}}
is used in the PR scenario, which comes from the Australian Health Survey of 1977-1978 and consists of 5190 single
 adults with 19 features.


\subsection{Setting}
All our experiments are run on Linux servers with 32 Intel(R) Xeon(R) CPU E5-2640 v2@2.00GHz and 128GB RAM. For each server, the used
CPU resources and network bandwidth are limited to 16 cores and 1000Mbps, respectively. For parameters other
than hardwares, the Paillier HE key length, max iteration, threshold, learning rate of LR and PR are set to 1024, 30, 1e-4, 0.15 and 0.1, respectively. 
We set the ratio between training and test set to 7:3

\subsection{Experiments and result}
We compare our framework with those methods with (TP-LR~\cite{HELR2018}, TP-PR inspired by~\cite{HEPR2017})  and without (SS-LR~\cite{SSLR2020}, SS-HE-LR~\cite{ZWW21})
a third party. Here, the TP-LR and TP-PR are based on HE; the SS-LR is based on SS and the SS-HE-LR is based on both SS and HE methods.
In addition to the SS-HE-LR that has been implemented by FATE, we implement other methods by ourselves.



\begin{figure}[h]
\centering
\includegraphics[width=8cm,height=5cm]{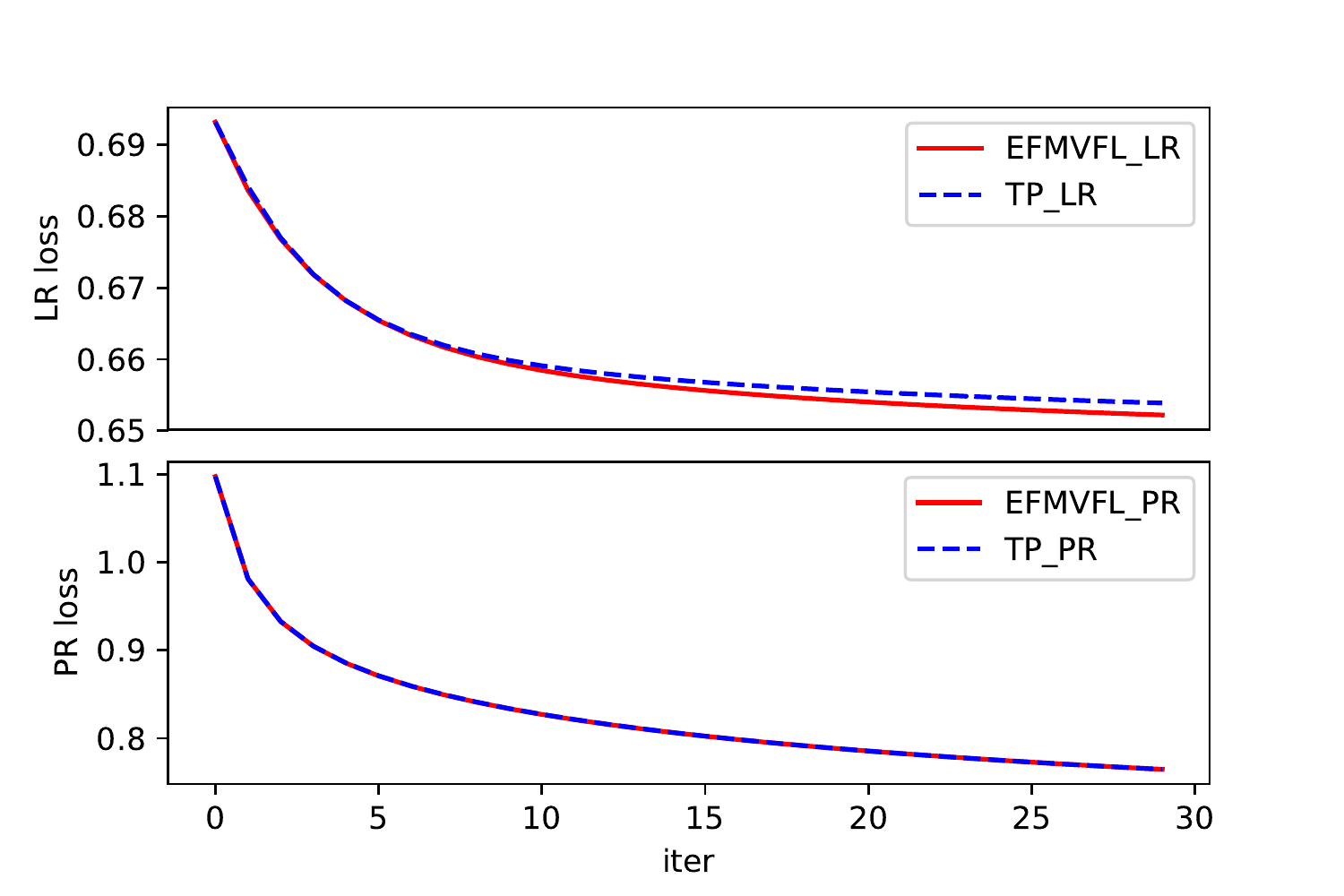}
\caption{Training loss curve of LR (upper) and PR (lower)}
\label{fig:loss}
\end{figure}

From the training loss curve in Figure \ref{fig:loss}, we show that the losses of our framework (red solid lines) are almost identical to those methods with a
third party (blue dashed lines). Note that the difference between loss curves in the upper panel is because
the loss used in TP-LR is a Taylor approximation of our method.
\begin{table}
\centering
\begin{tabular}{lllll}
\hline
framework  & auc & ks & comm  & runtime \\
\hline
TP-LR       & 0.712 & 0.371  & \textbf{14.20mb} &  34.79s     \\
SS-LR       & 0.719 & 0.363  & 181.8mb & 71.05s     \\
SS-HE-LR    & 0.702 & 0.367  & 85.30mb &  37.6s    \\
EFMVFL-LR   & 0.712 & 0.372  & \textbf{26.45mb} & \textbf{23.29s}    \\
\hline
\end{tabular}
\caption{LR results on test set}
\label{tab:LR}
\end{table}
    
\begin{table}
\centering
\begin{tabular}{lllll}
\hline
framework  & mae &rmse &comm & runtime \\
\hline
TP-PR       & 0.571 & 0.834 & \textbf{4.27mb} & 12.44s     \\
EFMVFL-PR  & 0.571 & 0.834 & \textbf{5.60mb} & \textbf{10.78s}     \\
\hline
\end{tabular}
\caption{PR results on test set}
\label{tab:PR}
\end{table}
Table \ref{tab:LR} and Table \ref{tab:PR} provide more details about comparison results for these methods. Please note that all these results are measured on test set in the case of 2 parties.
Since our framework needs only one product between plaintext matrix and ciphertext vector for each party in each iteration (see line 4 of Protocol~\ref{alg:sg-protocol}), it shows less communication consumption and is the most efficient, as expected.

What's more, the communication of our framework is the least one among methods that utilize SS, while a little more than the method that needs a third party. 
The main reason is that compared to SS-based methods that have to share all of original data, our framework only needs to share a vector (see Section~\ref{2p}).

Figure \ref{fig:party} shows the communication and runtime of our framework in LR scenario as a function of the number of participants, and it is similar in PR.
Here, the star points represent the experimental measurements. 
In the lower panel, for clarity, we fit a straight line to show that the framework communication increases linearly with the number of participants. 
In the upper panel, as the number of participants changes from 2 to 3, we find the runtime increases suddenly and then flattens out. 
This is because when it comes to multiple parties, there would be 2 cipher product operations for parties that are not the MPC computing party (see Algorithm~\ref{alg:my_alg}).

\begin{figure}[!h]
\centering
\includegraphics[width=8cm,height=5cm]{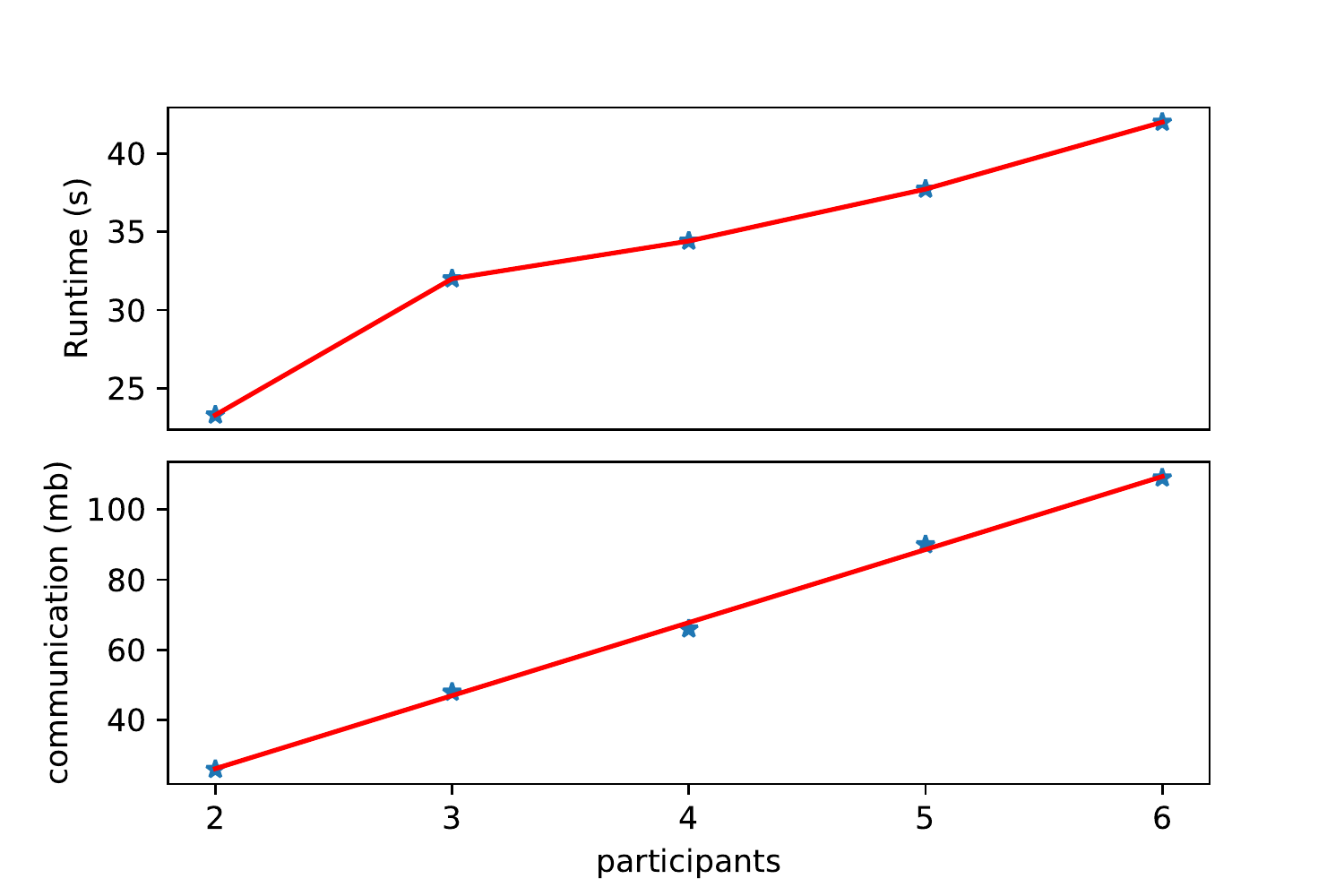}
\caption{Variation curve of runtime (upper) and communication (lower) as the number of participants increases}
\label{fig:party}
\end{figure}


\section*{Conclusion}
In this paper, we present an Efficient and Flexible Multi-Party Vertical Federated Learning framework (EFVFL) that does not require a third party by combining SS and HE.
The framework is applicable to many kinds of generalized linear regression models and has been shown in logistic regression and Poisson regression scenarios. 
Through theoretical analysis and comparison with some recent popular FL works, we show that EFVFL is secure, effective, and more efficient with less communication overhead.
What's more, our framework is scalable to multi-party modeling, and experiments show that runtime and communication both grow almost linearly as the number of participants increases.
In the future, we will expand our framework to more machine learning algorithms.
\section*{Acknowledgments}
We thank Tianxiang Mao and Renzhang Liu for their professional advice to this work.

\bibliographystyle{named}
\bibliography{ijcai22}

\end{document}